\DeclareMathOperator*{\argmin}{argmin}
\DeclareMathOperator*{\e}{e}
\DeclareMathOperator*{\diag}{diag}
\newtheorem{definition}{Definition}
\newtheorem{theorem}{Theorem}
\newtheorem{lemma}{Lemma}
\newtheorem{proposition}{Proposition}
\begin{document}

\title{Indefinite Kernel Logistic Regression with Concave-inexact-convex Procedure}

\author{Fanghui Liu, Xiaolin Huang, Chen Gong, Jie Yang, and Johan A.K. Suykens
\thanks{
This work was supported in part by the National Natural Science Foundation of China under Grant 61572315, Grant 6151101179, Grant 61603248, Grant 61602246, in part by 973 Plan of China under Grant 2015CB856004, in part by the Natural Science Foundation of Jiangsu Province under Grant BK20171430, in part by the Fundamental Research Funds for the Central Universities (No: 30918011319), the open project of State Key Laboratory of Integrated Services Networks (Xidian University, ID: ISN19-03), and the “Summit of the Six Top Talents” Program (No: DZXX-027), in part by FWO G0A4917N, G.0377.12, G.088114N, IUAP P7/19 DYSCO, KU Leuven CoE PFV/10/002 (OPTEC)
(\emph{Corresponding authors: Jie Yang and Xiaolin Huang.})}
\thanks{
 F. Liu, X. Huang and J. Yang are with Institute of Image Processing and Pattern Recognition, Shanghai Jiao Tong University, Shanghai 200240, China (e-mail: lfhsgre@sjtu.edu.cn; xiaolinhuang@sjtu.edu.cn; jieyang@sjtu.edu.cn).
 }
 \thanks{
C. Gong is with the Key Laboratory of Intelligent Perception and Systems for High-Dimensional Information of Ministry of Education, School of Computer Science and Engineering, Nanjing University of Science and Technology, Nanjing, 210094, P.R. China (email: chen.gong@njust.edu.cn).}
\thanks{J.A.K. Suykens is with the Department of Electrical Engineering
(ESAT-STADIUS), KU Leuven, B-3001 Leuven, Belgium (email:
johan.suykens@esat.kuleuven.be).}
}


\maketitle

\begin{abstract}
In kernel methods, the kernels are often required to be positive definite, which restricts the use of many indefinite kernels.
To consider those non-positive definite kernels, in this paper, we aim to build an indefinite kernel learning framework for kernel logistic regression.
The proposed indefinite kernel logistic regression (IKLR) model is analysed in the Reproducing Kernel Kre\u{\i}n Spaces (RKKS) and then becomes non-convex.
Using the positive decomposition of a non-positive definite kernel, the derived IKLR model can be decomposed into the difference of two convex functions.
Accordingly, a concave-convex procedure is introduced to solve the non-convex optimization problem.
Since the concave-convex procedure has to solve a sub-problem in each iteration, we propose a concave-inexact-convex procedure (CCICP) algorithm with an inexact solving scheme to accelerate the solving process.
Besides, we propose a stochastic variant of CCICP to efficiently obtain a proximal solution, which achieves the similar purpose with the inexact solving scheme in CCICP.
The convergence analyses of the above two variants of concave-convex procedure are conducted.
By doing so, our method works effectively not only under a deterministic setting but also under a stochastic setting.
Experimental results on several benchmarks suggest that the proposed IKLR model performs favorably against the standard (positive-definite) kernel logistic regression and other competitive indefinite learning based algorithms.
\end{abstract}

\begin{IEEEkeywords}
indefinite kernel learning, kernel logistic regression, concave-inexact-convex procedure, stochastic gradient descent
\end{IEEEkeywords}

\IEEEpeerreviewmaketitle

\section{Introduction}
\label{sec:intro}
\IEEEPARstart{K}{ernel} methods \cite{Sch2003Learning,Vapnik2000The} have been successfully applied to many machine learning tasks such as classification \cite{Zhu2002Kernel,Chen2016Multi}, regression \cite{drucker1997support}, and clustering \cite{dhillon2004kernel}.
In these algorithms, a kernel function $\mathcal{K}(\bm{x}_i,\bm{x}_j)$ is employed to evaluate the similarity between two data points $\bm{x}_i$ and $\bm{x}_j$.
Herein, a positive definite kernel $\mathcal{K}$ results in a positive semi-definite (PSD) kernel matrix $\bm{K}$ to satisfy Mercer's condition.
Consequently, the above approaches with positive definite kernels can be theoretically analyzed in the reproducing kernel Hilbert spaces (RKHS) \cite{Evgeniou2000R}.

Nevertheless, in real-world applications, we might meet some \emph{indefinite} (real, symmetric, but not positive definite) \cite{Schleif2015Indefinite} similarity measures due to the following reasons.
First, one can comprehensively exploit the domain-specific structure in data and accordingly design a certain similarity measure.
The kernel matrix derived by such measure often achieves promising empirical performance without any positive definiteness requirement on it.
For example, one can utilize the Smith Waterman and BLAST score \cite{Saigo2004Protein} for the protein sequence, the optimal assignment kernels \cite{Kriege2016On} for graph classification, or dynamic time warping \cite{Keogh2005Exact,Marteau2013On} for time series.
In these cases, the corresponding kernel matrices generated by such similarities are not PSD.
Second, the developed similarity measurements might be contaminated by outliers or noises \cite{Ying2009Analysis}, that makes the initial PSD kernel matrix degenerate to an indefinite one.
Third, the Mercer condition is difficult to verify even if a kernel is positive definite in essence.
In this case, we have to tackle the kernel as an indefinite one.
Last, a positive definite kernel cannot be guaranteed to be positive definite embed in another space.
For instance, the Gaussian kernel is perhaps the most popular positive definite kernel with widespread applications.
In a Riemannian manifold, the geodesic distance is more accurate than the Euclidean distance \cite{Zhang2015Learning,gong2015deformed}, and accordingly, it seems natural to use the geodesic distance in the Gaussian kernel \cite{Jayasumana2013Kernel}.
However, the kernel derived in this manner is not positive definite in general \cite{Feragen2015Geodesic}.
Based on above analyses, there are both algorithmic and theoretical requirements to consider these \emph{indefinite} similarities.
Here we mainly discuss indefinite support vector machine (SVM) in above two aspects.

In theory, a proper nonlinear feature mapping for indefinite kernels should be re-defined because Mercer's theorem is no longer valid.
Hence, Ong \emph{et~al.} \cite{Cheng2004Learning} introduce the Reproducing Kernel Kre\u{\i}n Spaces (RKKS) to give a characterization in terms of primal and dual problems for SVM with indefinite kernels \cite{Ga2016Learning}.
Compared with the conventional RKHS for positive definite kernels, the inner products might be negative for RKKS.

The solving algorithms for indefinite kernel learning can be grouped into two categories: spectrum modification and non-convex optimization.
In the first approach, the indefinite kernel matrix is transformed into a PSD one by spectrum modification, and then a regular solver can be used.
For example, ``Flip" \cite{Graepel1999Classification} uses the absolute value of eigenvalues, ``Clip" \cite{Pekalska2002A} sets the nonnegative eigenvalues to zero, ``Shift" \cite{roth2003optimal} plus a positive constant to all eigenvalues until the smallest one is zero.
However, the above operations actually change the indefinite matrix itself,  which results in the inconsistency between the training and the test kernel. 
Comparably, some solvers can directly deal with the non-convex dual problem within indefinite kernel matrices.
In \cite{CC01a}, SVM with indefinite kernels can be still solved by the SMO-type algorithm.
Note that this algorithm still converges but the solution is just a stationary point.
Besides, in \cite{Akoa2008Combining,xu2017solving}, the authors directly introduce a non-convex approach, the concave-convex procedure (CCCP) \cite{Yuille2003The} to solve such problem effectively.


In this paper, we focus on kernel logistic regression (KLR) with indefinite kernels.
KLR is a powerful and representative classifier and has been shown to be effective for classification tasks.
However, KLR equipped with indefinite kernels has not yet been investigated in the past.
Formally,
the contributions of this paper are summarized as follows.
\begin{enumerate}
  \item We build the IKLR model in the Reproducing Kernel Kre\u{\i}n Spaces (RKKS), and directly focus on its non-convex primal form, the formulation of which keeps consistency to the standard KLR.
  \item To accelerate the solving process, we propose a concave-inexact-convex procedure (CCICP) algorithm with an early termination technique to obtain a proximal solution during each iteration.
      We provide an convergence analysis of such approximation algorithm.
  \item A stochastic variant of CCICP is proposed with convergence guarantees, which achieves the similar effect with the inexact solving scheme.
\end{enumerate}


This paper is the extended version of our previous work \cite{SgrE_IKLR}.
Apart from details added in several sections, the main extension contains three parts:
First, we incorporate stochastic gradient descent (SGD) into CCCP to solve the proposed IKLR model, and then a stochastic variant of CCICP is presented to further accelerate the solving process.
Second, the convergence analysis of such stochastic optimization algorithm is theoretically demonstrated.
Third, we provide more experiments results on several benchmarks.
And accordingly, more parameters comparisons analysis and computational complexity analysis are also provided.

The remainder of the paper is organized as follows. Section \ref{sec:KLR} briefly reviews kernel logistic regression.
Section \ref{sec:IKLR} introduces the proposed IKLR model.
The optimization algorithm of the IKLR model is presented in Section \ref{sec:CCICP}.
Experimental results on several datasets are presented in Section \ref{sec:experiment}, and conclusion is given in Section \ref{sec:conclusion}.

\section{Review: Kernel Logistic Regression}
\label{sec:KLR}

In this section, we briefly review the regular kernel logistic regression in the binary classification task.
Let $\big\{ (\bm{x}_i,y_i) \big\}^{n}_{i=1}$ with its label $y_i \in \{+1,-1\}$ be $n$ training points, we concern the inference of a function $f: \mathcal{X} \rightarrow \mathcal{Y}$ that predicts a target $y \in \mathcal{Y}$ of a data point $\bm x \in \mathcal{X}$.
KLR can be fit in the regularization framework of \emph{loss}+\emph{penalty} using the exponential loss function $\ell(f) = \ln (1+e^{-yf})$.
And thus, for a given positive regularization parameter $\lambda$, KLR is the minimum of the following regularized empirical risk functional
\begin{equation}\label{logit}
\begin{split}
\mathop{\mathrm{min}}\limits_{f\in \mathcal{H}}~~ \frac{1}{n}\sum_{i=1}^n\ln \Big(1+e^{-y_if(\bm{x}_i)} \Big) + \frac{\lambda}{2} \|f\|^2_{\mathcal{H}} \,,
\end{split}
\end{equation}
with the RKHS $\mathcal{H}$ generated by the positive definite kernel $\mathcal{K(\cdot,\cdot)}$.
Using the representer theorem \cite{Lkopf2000A} in RKHS, the optimal function $f^*$ is
\begin{align*}\label{fex}
  f^*=\sum_{i=1}^n\alpha_i\mathcal{K}(\bm{x}_i,\cdot)\,,
\end{align*}
where ${\bm \alpha}=[\alpha_1,\alpha_2,\cdots,\alpha_n]^{\top}$ is the coefficient vector.
Combining this to Eq.~\eqref{logit}, KLR is reformulated as
\begin{equation}\label{klr}
  \mathop{\mathrm{min}}\limits_{\bm \alpha}
  \frac{1}{n}\sum_{i=1}^n\ln \Big(1+\exp\big(-y_i\sum_{j=1}^n\alpha_j\bm{K}_{ij}\big)\Big) +  \frac{\lambda}{2}\sum_{i,j=1}^n\alpha_i\alpha_j\bm{K}_{ij} \,,
\end{equation}
with $\bm{K}_{ij}=\mathcal{K}(\bm{x}_i,\bm{x}_j)$.
After some straightforward algebraic manipulations, we obtain a compact form of KLR
\begin{equation}\label{klrmat}
  \mathop{\mathrm{min}}\limits_{{\bm \alpha}}~~
  \frac{1}{n}\bm{1}^{\!\top}\!\ln \Big(\bm{1}+\exp(-\bm{y} \odot \bm{K}{\bm \alpha})\Big) + \frac{\lambda}{2}\bm{{\bm \alpha}}^{\top}\bm{K}{\bm \alpha} \,,
\end{equation}
where $\bm 1$ is the all-one vector, $\bm y$ is the label vector, and $\odot$ denotes the Hadamard product.
Traditionally, $\bm{K}$ in Eq.~\eqref{klrmat} is required to be positive semi-definite, and accordingly the optimization problem is formulated as a convex unconstrained quadratic programming.

\section{Indefinite Kernel Logistic Regression Model}
\label{sec:IKLR}
The functional space spanned by indefinite kernels belong to the Reproducing Kernel Kre\u{\i}n Spaces (RKKS) \cite{bognar1974indefinite} instead of RKHS.
We first introduce Kre\u{\i}n spaces and then derive the IKLR model.


\begin{definition}\label{definiterkks}
(Kre\u{\i}n space \cite{bognar1974indefinite}) An inner product space is a Kre\u{\i}n space $\mathcal{H_K}$ if there exist two Hilbert spaces $\mathcal{H}_+$ and $\mathcal{H}_-$ such that
1) All $f \in \mathcal{H_K}$ can be decomposed into $f=f_++f_-$, where $f_+ \in \mathcal{H}_+$ and $f_- \in \mathcal{H}_-$, respectively.
2) $\forall f,g \in \mathcal{H_K}$, $\langle f,g\rangle_{\mathcal{H_K}}=\langle f_+,g_+\rangle_{\mathcal{H}_+} - \langle f_-,g_-\rangle_{\mathcal{H}_-}$.
\end{definition}
If $\mathcal{H}_+$ and $\mathcal{H}_-$ are RKHSs, $\mathcal{H_K}$ is a RKKS with a unique indefinite kernel $\mathcal{K}$ such that the reproducing property holds: for all $f \in \mathcal{H_K},~f(\bm x) = \langle f,k(\bm x,\cdot)\rangle_{\mathcal{H_K}}$.
In this space, the squared norm and the squared distance\footnote{A corresponding squared distance is defined as $d^2(x,x')=\mathcal{K}(x,x)- 2\mathcal{K}(x,x')+\mathcal{K}(x',x')$.} induced by an indefinite kernel $\mathcal{K}$ can be negative in contrast to the Euclidean case.
This definition may not define a metric, as it violates the triangle inequality.
However, this squared distance function is able to provide a justification of data representation in this vector space.
Details about the interpretation of SVM with indefinite kernels in the feature space can be found in \cite{Haasdonk2005Feature}.

Based on above analyses, our IKLR model with an indefinite kernel $\mathcal{K}$ is formulated as
\begin{equation}\label{logitkre}
\begin{split}
\mathop{\mathrm{min}}\limits_{f\in \mathcal{H_K}} ~ \frac{1}{n}\sum_{i=1}^n \ln \Big(1+e^{-y_if(\bm{x}_i)} \Big) + \frac{\lambda}{2} \|f\|^2_{\mathcal{H_K}}\,,
\end{split}
\end{equation}
with the RKKS $\mathcal{H_K}$ generated by the indefinite kernel $\mathcal{K(\cdot,\cdot)}$.
By the representer theorem in RKKS \cite{Cheng2004Learning}, the optimal $f^*$ admits
 \begin{align*}\label{fexx}
  f^*=\sum_{i=1}^n\alpha_i\mathcal{K}(\bm{x}_i,\cdot)\,,
\end{align*}
Accordingly, Eq.~\eqref{logitkre} can be rewritten as
\begin{equation}\label{iklrmat}
  \mathop{\mathrm{min}}\limits_{{\bm \alpha}}~
  \frac{1}{n}\bm{1}^{\top}\ln \Big(\bm{1}+\exp(-\bm{y} \odot \bm{K}{\bm \alpha})\Big) + \frac{\lambda}{2}\bm{{\bm \alpha}}^{\top}\bm{K}{\bm \alpha}\,.
\end{equation}
One can see that the proposed IKLR model in Eq.~\eqref{iklrmat} is similar to the regular KLR in Eq.~\eqref{klrmat}, but it must be analysed in RKKS and becomes non-convex because of the indefinite kernel matrix.

To solve such non-convex problem, we also need the following proposition.
\begin{proposition}
(\cite{Cheng2004Learning})
A non-positive definite kernel $\mathcal{K}$ in RKKS admits a positive decomposition on a given set
 \begin{equation*}
    \mathcal{K}(\bm{x}_i,\bm{x}_j) = \mathcal{K}_+(\bm{x}_i,\bm{x}_j) - \mathcal{K}_-(\bm{x}_i,\bm{x}_j), \forall \bm{x}_i, \bm{x}_j\in \mathcal{X}\,,
 \end{equation*}
 with two positive definite kernels $\mathcal{K}_+$ and $\mathcal{K}_-$.
\end{proposition}
Hence, the proposed IKLR model in Eq.~\eqref{iklrmat} can be further expressed as
\begin{equation}\label{iklrmain}
  \mathop{\mathrm{min}}\limits_{{\bm \alpha}}~
  \frac{1}{n}\bm{1}^{\!\top}\!\ln \Big(\bm{1}+\exp(-\bm{y}\odot \bm{K}{\bm \alpha})\Big) + \frac{\lambda}{2}\bm{{\bm \alpha}}^{\top}(\bm{K}_+\!-\bm{K}_-){\bm \alpha} \,,
\end{equation}
with two PSD kernel matrices $\bm{K}_+$ and $\bm{K}_-$, which can be obtained by eigenvalue decomposition of $\bm{K}$.
To be specific, $\bm{K}=\bm{V}^{\top}{\bm \Lambda} \bm{V}$,  where $\bm{V}$ is an orthogonal matrix and the diagonal matrix is ${\bm \Lambda} = \diag (\mu_1,\mu_2,\cdots,\mu_n)$  with eigenvalues $\mu_1 \geq \mu_2 \geq \cdots \geq\mu_n$.
 Without loss of generality, suppose that the first $s$ eigenvalues are nonnegative and the remaining $n-s$ ones are negative, $\bm{K}_+$ and $\bm{K}_-$ can thus be given by
\begin{align*}\label{Kdef}
\left\{
\begin{array}{rcl}
\begin{split}
&\bm{K}_+ = \bm{V}^{\top}\diag(\mu_1+\tau,\dots,\mu_v+\tau,\tau,\dots,\tau)\bm{V} \\
&\bm{K}_- = \bm{V}^{\top}\diag(\tau,\dots,\tau, \rho-\mu_{v+1},\dots,\tau-\mu_n)\bm{V}\,,
\end{split}
\end{array} \right.
\end{align*}
where $\tau$ is chosen by $\tau>-\mu_n$ to ensure that these two matrices $\bm{K}_+$ and $\bm{K}_-$ are positive semi-definite.
After conducting this positive decomposition, we decompose the objective function in Eq.~\eqref{iklrmain} as difference of two convex functions $g({\bm \alpha})$ and $h({\bm \alpha})$
\begin{equation}\label{maindef}
\left\{
\begin{array}{rcl}
\begin{split}
&g({\bm \alpha})=
  \frac{1}{n}\bm{1}^{\top}\ln \Big(\bm{1}+\exp(-\bm{y}\odot \bm{K}{\bm \alpha})\Big) + \frac{\lambda}{2}\bm{{\bm \alpha}}^{\top}\bm{K}_+{\bm \alpha}\\
&h({\bm \alpha})=\frac{\lambda}{2}\bm{{\bm \alpha}}^{\top}\bm{K}_-{\bm \alpha} \,.
\end{split}
\end{array} \right.
\end{equation}

\section{CCICP Optimization for IKLR}
\label{sec:CCICP}
This section first introduces a CCICP algorithm with two approximation schemes to solve the non-convex optimization problem, and then provides convergence analyses of the proposed CCICP algorithm.

\subsection{CCICP in the IKLR Model}
The concave-convex procedure \cite{Yuille2003The} is a typical non-convex algorithm to solve d.c. (difference of convex functions) programs.
By decomposing the non-convex objective function in Eq.~\eqref{iklrmain} into difference of two convex functions $g({\bm \alpha})$ and $h({\bm \alpha})$, the CCCP is an iterative procedure with
\begin{equation*}
  \bm \alpha_{k+1} \in \argmin_{\bm \alpha} ~g({\bm \alpha}) - \bm \alpha^{\!\top} \nabla h({\bm \alpha}_{k})\,.
\end{equation*}
The core idea of CCCP is to linearize the concave part of the non-convex objective function, \emph{i.e.}, $-h({\bm \alpha})$, around its current solution ${\bm \alpha}_k$.
At each iteration, its convex approximation is formulated as
\begin{equation}\label{faak}
\mathcal{F}_k({\bm \alpha}) \triangleq \mathcal{F}({\bm \alpha},{\bm \alpha}_k) = g({\bm \alpha})  -\big[h({\bm \alpha}_k) + \nabla h^{\top}({\bm \alpha}_k)({\bm \alpha}-{\bm \alpha}_k)\big]\,.
\end{equation}
Here $\mathcal{F}_k({\bm \alpha})$ can be solved by an off-the-shelf convex algorithm such as the gradient descent method to obtain ${\bm \alpha}_{k+1}$.
To be specific, in our model, $h({\bm \alpha})$ is replaced by its first order Taylor approximation at ${\bm \alpha}_{k}$
\begin{equation*}
  \tilde{h}({\bm \alpha}_{k})=h({\bm \alpha}_{k})+\lambda{\bm \alpha}_k^{\top}\bm{K}_-({\bm \alpha}-{\bm \alpha}_{k})\,.
\end{equation*}
Combining this to Eq.~\eqref{faak}, we have
\begin{equation}\label{appobj}
\mathcal{F}_k({\bm \alpha}) = \frac{\lambda}{2}\bm{{\bm \alpha}}^{\top}\bm{K}_+{\bm \alpha}
  +\frac{1}{n}\bm{1}^{\!\!\top}\!\ln \! \Big(\bm{1}+\exp(-\bm{y}\odot\bm{K}{\bm \alpha})\Big) - \tilde{h}({\bm \alpha}_{k})\,.
\end{equation}

Nonetheless, one can see that, at each iteration, the CCCP needs to solve the sub-problem, which makes CCCP inefficient especially for a large scale problem.
Based on this, we attempt to obtain an inexact solution of the sub-problem to speed up the solving process, termed as the concave-inexact-convex procedure (CCICP).
To this end, we develop two approaches, one is the gradient descent method with an early termination condition, termed ``CCICP-GD", and the other is incorporated with SGD to achieve the similar effect, termed ``CCICP-SGD".

\subsubsection{Solving with CCICP-GD}
In our CCICP-GD method, the sub-problem is solved by the gradient descent method, in which the gradient $\nabla_{{\bm \alpha}}\mathcal{F}_k({\bm \alpha})$  is
\begin{equation}\label{grad}
  \nabla_{{\bm \alpha}}\mathcal{F}_k({\bm \alpha})=\lambda \bm{K}_+{\bm \alpha} -\frac{1}{n} \bm y \odot \bm{K}\bm{\beta}-\lambda\bm{K}_-{\bm \alpha}_{k}\,,
\end{equation}
where $\bm{\beta} = ({\beta_1},{\beta_2},\dots,{\beta_n})^{\top}$ is defined by
\begin{equation}\label{qi}
  \beta_i = \frac{1}{1+\exp\big(y_i\sum_{j=1}^n\beta_j\bm{K}_{ij}\big)}, \quad \forall i=1,2,\dots,n.
\end{equation}
Using an early termination condition, the inexact solution ${\bm \alpha}_{k+1} \triangleq {\bm \alpha}_{k}^{(T)}$ after $T$ iterations is obtained by ${\bm \alpha}_{k+1}\approx\mathop{\mathrm{argmin}}\limits_{\bm \alpha} \mathcal{F}_k({\bm \alpha})$.
In Section \ref{sec:AC}, we detail the definition of such early termination condition and then theoretically demonstrate the convergence analyses of such approximation.

\subsubsection{Solving with CCICP-SGD}
The stochastic gradient descent method can also achieve the similar effect with such inexact scheme to solve the sub-problem.
Since it only processes a mini-batch of data points \cite{Cotter2011Better} or even one data point \cite{Mitliagkas2013Memory} in each iteration, the computational cost per iteration dramatically decreases.
To be specific, in our algorithm, we randomly pick up only one data point to compute the gradient in the sub-problem.
The updating scheme is given by
\begin{equation}\label{SGD}
{\bm \alpha}^{(t+1)} \leftarrow {\bm \alpha}^{(t)} - \eta_t \big( -y_j\bm{K}_j\beta_j + \lambda \bm{K}_+{\bm \alpha}^{(t)} \big)\,,
\end{equation}
where $\eta_t$ is the step size in the $t$th iteration, and $\bm{K}_j$ represents the (randomly picked) $j$th column of the kernel matrix $\bm{K}$.
Finally, the detailed procedure of the CCICP algorithm with GD and SGD for IKLR is summarized in Algorithm \ref{alg:one1}.

\begin{algorithm}
\SetAlgoNoLine
\KwIn{an indefinite kernel matrix $\bm{K}$ and its positive decomposition $\bm{K}_+$ and $\bm{K}_-$.}
\KwOut{the coefficient vector ${\bm \alpha}$.}
Set:  stopping criterion: the inexact parameter $\epsilon=1$, $k_{\max}=20$, the learning rate $\eta=0.02$, and $\rho=0.8$\;
Initialize $k=0$ and ${\bm \alpha}_0$\;
The parameter $\lambda$ is chosen by cross-validation\;
\SetKwRepeat{RepeatUntil}{Repeat}{Until}
\RepeatUntil{$k=k_{\max}$ }
{Obtain $\tilde{h}({\bm \alpha}_k)=\lambda\bm{K}_-{\bm \alpha}_k$ and the sub-problem $\mathcal{F}_k({\bm \alpha})$ by Eq.~\eqref{appobj}\;
\tcp*[h]{Solve the sub-problem.}\\
Initialize $t=0$ and compute $\mathcal{F}_k({\bm \alpha}^{(0)}_k)$ \;
\While{$\| \mathcal{F}_k({\bm \alpha}^{(t+1)}_k) - \mathcal{F}_k({\bm \alpha}^{(t)}_k)\| > \epsilon $ }{
Obtain the gradient $\nabla \mathcal{F}_k({\bm \alpha}^{(t)}_k)$ by Eq.~\eqref{grad}\;
GD: ${\bm \alpha}^{(t+1)}_{k}:={\bm \alpha}^{(t)}_{k}-\eta_t\nabla \mathcal{F}_k({\bm \alpha}^{(t)}_k)$\;
SGD: Randomly pick a $j\in \{ 1,2,\cdots,n\}$ and then update ${\bm \alpha}^{(t+1)}_{k}$ by Eq.~\eqref{SGD}\;
$\eta := \rho \eta$ \;
$t := t + 1$\;

}
Output the inexact solution ${\bm \alpha}_{k+1} := {\bm \alpha}_k^{(t)}$ of Eq.~\eqref{appobj}\;
\tcp*[h]{Complete the inner loop.}\\
$k := k + 1$\;
}
Output the stationary point ${\bm \alpha}_{k_{\max}}$ of Eq.~\eqref{maindef}.
\caption{CCICP for the IKLR model.}
\label{alg:one1}
\end{algorithm}

One can see that Algorithm \ref{alg:one1} contains two loops.
In each iteration of the outer loop, an unconstrained quadratic programming is solved by gradient descent and its complexity is $\mathcal{O}(dn)$, where $d$ is the feature dimension and $n$ is the number of training examples.
Finally, the total computational complexity of our CCICP algorithm is $\mathcal{O}(Tkdn)$, where $T$ is the number of convergence iterations and $k$ is the number of classes.

When we obtain $\bm \alpha^*$ by Algorithm~\ref{alg:one1}, a test data point $\bm z$ can be predicted by
\begin{align*}\label{score}
  p(\bm{z})=\frac{\exp\big(\bm{K_z}{\bm \alpha^*} \big)} {1+\exp\big(\bm{K_z}{\bm \alpha^*}\big)}\,,
\end{align*}
with $\bm{K_z}=[\mathcal{K}(\bm{x}_1,\bm{z}), \mathcal{K}(\bm{x}_2,\bm{z}), \dots \mathcal{K}(\bm{x}_n,\bm{z})]$.
If $p(\bm z) \geq 0.5$, its label is predicted by $+1$, and $-1$ otherwise.

\subsection{Analysis of CCICP-GD}
\label{sec:AC}
This section investigates the convergence of the proposed CCICP-GD.
Since the gradient descent algorithm is used to solve the sub-problem, it satisfies $\mathcal{F}_{k}({\bm \alpha}_{k+1})\triangleq \mathcal{F}_k({\bm \alpha}_k^{(T)})\leq \mathcal{F}_k({\bm \alpha}_k^*)$.
Herein, the inexact solution ${\bm \alpha}_k^{(T)}$ satisfies
 \begin{align*}
   {\bm \alpha}^{(T)}_k \in \text{U}_{\delta({\bm \alpha})}({\bm \alpha}_k^{*})\triangleq \big\{ {\bm \alpha} \mid \| {\bm \alpha} - {\bm \alpha}_k^* \| \leq \delta({\bm \alpha}) \big\}\,,
 \end{align*}
 where ${\bm \alpha}^*_{k}=\mathop{\mathrm{argmin}}\limits_{\bm \alpha} \mathcal{F}_k({\bm \alpha})$ is the optimal result.
  The notation $\delta({\bm \alpha})$ depends on the current solution, and it should be bounded to guarantee the convergence of CCICP.
 In this case, such approximation solution ${\bm \alpha}_k^{(T)}$ does not satisfy the Karush-Kuhn-Tucker (KKT) condition.
Suppose that
\begin{equation}\label{gradbe}
\nabla_{{\bm \alpha}} \mathcal{F}_k({\bm \alpha}) |_{{\bm \alpha}={\bm \alpha}_k^{(T)}}  = \epsilon \| {\bm \alpha}_{k}\| \neq 0\,,
\end{equation}
where $\epsilon$ depends on $\delta({\bm \alpha})$, and its choice will be analysed to guarantee the convergence of CCICP-GD in the following description.


The main result for CCICP-GD is demonstrated by Theorem \ref{theo}, that is, when $\epsilon$ is upper bounded, the sequence $\{{\bm \alpha}_k \}_{k=0}^{\infty}$ generated by CCICP with an initial point ${\bm \alpha}_0\in \mathbb{R}^n$ still converges.
Before we proceed with the proof of Theorem \ref{theo}, we need the following Lemma \ref{lemmar}.
\begin{lemma}\label{lemmar}
Given a sigmoid function $R(x)=({1+\e^{cx}})^{-1}$ with $c\in\{+1,-1\}$ on $\mathbb{R}$, for any $ x_1 < x_2$, we have
\begin{equation}\label{boundr}
  \big| R(x_1) - R(x_2) \big| \leq \frac{1}{4} \big| x_1 - x_2 \big|\,.
\end{equation}
\end{lemma}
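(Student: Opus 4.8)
The plan is to reduce the claimed inequality to a uniform bound on the derivative of $R$ and then invoke the Mean Value Theorem. Since $R$ is smooth on all of $\mathbb{R}$, for any $x_1,x_2$ there is a point $\xi$ strictly between them with $R(x_1)-R(x_2)=R'(\xi)(x_1-x_2)$, so it suffices to show $\sup_{x\in\mathbb{R}}|R'(x)|\leq \tfrac14$. The inequality $|R(x_1)-R(x_2)|\leq \tfrac14|x_1-x_2|$ then follows immediately by taking absolute values.

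The main computation is the derivative. Writing $R(x)=(1+\e^{cx})^{-1}$, the chain rule gives
\begin{equation}\label{Rprime}
R'(x) = -\,\frac{c\,\e^{cx}}{(1+\e^{cx})^2}\,.
\end{equation}
I would next observe that $\tfrac{\e^{cx}}{(1+\e^{cx})^2}=R(x)\bigl(1-R(x)\bigr)$, since $1-R(x)=\tfrac{\e^{cx}}{1+\e^{cx}}$. Hence $|R'(x)|=|c|\,R(x)\bigl(1-R(x)\bigr)=R(x)\bigl(1-R(x)\bigr)$ because $c\in\{+1,-1\}$ forces $|c|=1$. This unifies the two cases $c=+1$ and $c=-1$ at once: the sign of $c$ only flips the sign of $R'$ and is absorbed by the absolute value, so no separate argument is needed for the two choices of $c$.

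The remaining step is the elementary bound $t(1-t)\leq \tfrac14$ for all $t\in[0,1]$, which holds since $\tfrac14 - t(1-t)=\bigl(t-\tfrac12\bigr)^2\geq 0$, with equality exactly at $t=\tfrac12$. Because $R(x)\in(0,1)$ for every real $x$, setting $t=R(x)$ yields $|R'(x)|=R(x)\bigl(1-R(x)\bigr)\leq \tfrac14$ uniformly in $x$. Combining this with the Mean Value Theorem step above completes the proof.

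I do not expect a genuine obstacle here: the result is the standard fact that the sigmoid (logistic) function is $\tfrac14$-Lipschitz, and every ingredient is elementary. The only point requiring a moment of care is the unification of the two values of $c$, which is why I would factor the derivative through the identity $|R'(x)|=R(x)(1-R(x))$ rather than treating $R(x)=(1+\e^{x})^{-1}$ and $R(x)=(1+\e^{-x})^{-1}$ separately.
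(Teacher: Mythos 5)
Your proof is correct and follows essentially the same route as the paper's: apply the Mean Value Theorem and then bound $\sup_x |R'(x)| \leq \tfrac14$. The only (cosmetic) difference is how that bound is obtained---you write $|R'(x)| = R(x)\bigl(1-R(x)\bigr)$ and maximize the quadratic $t(1-t)$, while the paper writes $|R'(\xi)| = \bigl(\e^{\xi}+\e^{-\xi}+2\bigr)^{-1}$ and uses $\e^{\xi}+\e^{-\xi} \geq 2$; these are algebraically equivalent.
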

\begin{proof}
Since $R(x)$ is a differentiable function, by the Lagrange mean value theorem, there exists at least one point $\xi \in \big(x_1,x_2\big)$ such that
\begin{align*}\label{Lmean}
  \big| R(x_1) - R(x_2) \big| = \big| (x_1 - x_2)R'(\xi)\big|\,,
\end{align*}
where $|R'(\xi)|$ admits
\begin{align*}
  |R'(\xi)| = \frac{\e^{c\xi}}{(1+\e^{c\xi})^2} = \frac{1}{\e^{c\xi}+\e^{-c\xi}+2}\leq \frac{1}{4}\,,
\end{align*}
which concludes the proof.
\end{proof}
We now present the convergence theorem for CCICP-GD.
\begin{theorem}\label{theo}
Let $\{{\bm \alpha}_k \}_{k=0}^{\infty}$ be any sequence generated by CCICP-GD, its limit point is a stationary point if $\epsilon$ in Eq.~\eqref{gradbe} satisfies
\begin{equation}\label{vare}
  \epsilon < \lambda (\|\bm{K}_+\|- \| \bm{K}_-\|) - \frac{\|\bm{K}\|^2}{4n}\,.
\end{equation}
\end{theorem}
\begin{proof}
Let $\phi: U \subset \mathbb{R}^n \rightarrow \mathbb{R}^n$ be a point-to-set map such that
\begin{align*}
  \phi({\bm \alpha}_{k}) = \mathop{\mathrm{argmin}} \limits_{\bm \alpha}\mathcal{F}_k({\bm \alpha})\,,
\end{align*}
which generates an inexact sequence $\{{\bm \alpha}_k \}_{k=0}^{\infty}$.
Besides, $\phi({\bm \alpha}_k)$ satisfies
\begin{align*}
\nabla_{{\bm \alpha}} \mathcal{F}_k({\bm \alpha}) |_{{\bm \alpha}=\phi({\bm \alpha}_k)}  = \epsilon \| {\bm \alpha}_{k}\| \,.
\end{align*}
In the next, we aim to prove that the map $\phi$ is a non-expansive mapping for two arbitrary points $\bm{p}, \bm{q} \in int(U)$ such that
\begin{align*}
  \big\| \phi(\bm{p}) - \phi(\bm{q})\big\| \leq \kappa \| \bm{p} - \bm{q} \|\,,
\end{align*}
where the non-expansive coefficient is $\kappa \in [0,1)$.
Suppose that $ \phi(\bm{p})$ and $ \phi(\bm{q})$ satisfy
\begin{eqnarray}
  \nabla_{{\bm \alpha}} \mathcal{F}({\bm \alpha}, \bm{p}) |_{{\bm \alpha}=\phi(\bm{p})} &=& \epsilon_1 \|\bm{p} \|\,, \label{grada}\\
  \nabla_{{\bm \alpha}} \mathcal{F}({\bm \alpha}, \bm{q}) |_{{\bm \alpha}=\phi(\bm{q})}  &=& \epsilon_2 \|\bm{q} \|\,, \label{gradb}
\end{eqnarray}
where $\epsilon_1$ and $\epsilon_2$ correspond to the bounded error.
For simplicity, suppose $\epsilon_1 \leq \epsilon_2$, so the difference between Eqs.~\eqref{grada} and \eqref{gradb} is given by
\begin{equation}\label{substr}
\lambda\bm{K}_+\big[\phi(\bm{p})-\phi(\bm{q})\big] = \lambda \bm{K}_-(\bm{p}-\bm{q})+\frac{1}{n}\bm y \odot \bm{Kh} + \epsilon_1 \|\bm{p} \| - \epsilon_2 \|\bm{q} \|\,,
\end{equation}
where $\bm{h}=[h_1,h_2,\dots,h_n]^{\!\top}$ is defined by
\begin{align*}
  h_i = \frac{1}{1+\exp\big(y_i\bm{K}^{(i)}\phi(\bm{p})\big)} - \frac{1}{1+\exp\big(y_i\bm{K}^{(i)}\phi(\bm{q})\big)}\,.
\end{align*}
Using Lemma \ref{lemmar}, we have
\begin{align*}
|h_i| \leq \frac{1}{4} | \bm{K}^{(i)} \phi(\bm{p}) - \bm{K}^{(i)} \phi(\bm{q})|, \quad \forall i=1,2,\dots,n\,,
\end{align*}
and accordingly $\|\bm{h}\|_{\infty}$ satisfies\footnote{Here we use $|\bm{p}^{\top}\bm{q}|=\|\bm{p}\|_a\|\bm{q}\|_b$, where $\frac{1}{a}+\frac{1}{b}=1$.}
\begin{small}
\begin{align*}
\begin{split}
  \| h \|_{\infty}  & \leq  \frac{ \big| \bm{K}^{(s)} \phi(\bm{p}) - \bm{K}^{(s)} \phi(\bm{q}) \big|}{4} \leq \frac{\|\bm{K}^{(s)}\|_1\!\cdot\!\|\phi(\bm{p})\!-\!\phi(\bm{q})\|_{\infty}}{4} \\
  &\leq \frac{1}{4} \|\bm{K}\|_{\infty}\|\phi(\bm{p})-\phi(\bm{q})\|_{\infty}\,,
  \end{split}
\end{align*}
\end{small}
with $s\!=\!\mathop{\mathrm{argmin}} \limits_{i} \big| \bm{K}^{(i)} \phi(\bm{p}) - \bm{K}^{(i)} \phi(\bm{q})\big|, i\!=\!1,2,\dots,n$.
Since $\bm{K}_+$ is positive definite, Eq.~\eqref{substr} can be rewritten as
\begin{equation*}
\begin{split}
  & \phi(\bm{p}) - \phi(\bm{q}) \\
  & = \frac{1}{\lambda}\bm{K}_+^{-1} \Big\{\lambda \bm{K}_-(\bm{p}-\bm{q})+\frac{1}{n} {\bm y \odot \bm{Kh}} + \epsilon_1 \|\bm{p} \| - \epsilon_2 \|\bm{q} \| \Big\}.
\end{split}
\end{equation*}
And accordingly, $\| \phi(\bm{a})\!-\!\phi(\bm{b}) \|$ can be bounded by Eq.~\eqref{Esim} (see in the next page), which leads to 
\newcounter{mytempeqncnt}
\begin{figure*}[!t]
\normalsize
\begin{equation}\label{Esim}
\begin{split}
 \| \phi(\bm{p})\!-\!\phi(\bm{q}) \|  & \leq \! \frac{1}{\lambda}\bm{K}_+^{-1}\! \Big\{\!\lambda \bm{K}_-(\bm{p}\!-\!\bm{q})\!+\!\frac{1}{n}\bm y \odot \bm{Kh}\! +\! \epsilon_1 \|\bm{p} \|\! -\! \epsilon_2 \|\bm{q} \| \Big\} \\
 &\!\leq\! \big\| \bm{K}_+^{-1}\!\bm{K}_-\!\big\|\!~\!\|\bm{p}\!-\!\bm{q}\|\! + \! \frac{\big\| \bm{K}_+^{-1}\bm{K} \big\|}{\lambda n} \| \bm{h} \| \!+\! \frac{\epsilon_2}{\lambda} \| \bm{K}_+^{-1} \| \!~\! \Big| \|\bm{p} \| - \|\bm{q} \| \Big| \\
 &\!\leq\!\big\| \bm{K}_+^{-1}\big\| \big\|\bm{K}_-\big\|~\|\bm{p}-\bm{q}\| \!+\!
 \frac{\big\| \bm{K}_+^{-1}\bm{K} \big\| \big\|\bm{K} \big\|}{4\lambda n} \|\phi(\bm{p})-\phi(\bm{q})\| + \frac{\epsilon_2}{\lambda} \big\| \bm{K}_+^{-1}\big\| ~\| \bm{p} - \bm{q} \|\,.\\
\end{split}
\end{equation}
\vspace*{-4pt}
\end{figure*}
\begin{equation}\label{final1}
 \| \phi(\bm{p})-\phi(\bm{q}) \| \leq \frac{\|\bm{K}_-\|+\frac{\epsilon_2}{\lambda}}
 {\| \bm{K}_+\|-\frac{1}{4\lambda n} \|\bm{K}\|^2} \| \bm{p} - \bm{q} \|\,.
\end{equation}
Likewise, if $\epsilon_2 < \epsilon_1$, we have
\begin{equation}\label{final2}
 \| \phi(\bm{q})-\phi(\bm{p}) \| \leq \frac{\|\bm{K}_-\|+\frac{\epsilon_1}{\lambda}}
 {\| \bm{K}_+\|-\frac{1}{4\lambda n} \|\bm{K}\|^2} \| \bm{q} - \bm{p} \|\,.
\end{equation}
Accordingly, Eqs.~\eqref{final1} and \eqref{final2} can be reformulated as
\begin{align*}
 \| \phi(\bm{p})-\phi(\bm{q}) \| \leq \frac{\|\bm{K}_-\|+
 \frac{\max\{\epsilon_1,\epsilon_2\}}{\lambda}}
 {\| \bm{K}_+\|-\frac{1}{4\lambda n} \|\bm{K}\|^2} \| \bm{p} - \bm{q} \|\,,
\end{align*}
where we choose $\epsilon = \max\{\epsilon_1,\epsilon_2\}$.
Thereby, the map $\phi$ is a non-expansive mapping with the following condition
\begin{align*}
  \kappa \triangleq \frac{\|\bm{K}_-\|+
 \frac{\epsilon}{\lambda}}
 {\| \bm{K}_+\|-\frac{1}{4\lambda n} \|\bm{K}\|^2} <1\,,
\end{align*}
which derives the upper bound of $\epsilon$ presented in Eq.~\eqref{vare} by some straightforward algebraic manipulations.
Hence, by the fixed point theorem \cite{Goebel1972A}, the map $\phi$ is theoretically demonstrated to be a non-expensive mapping if $\epsilon$ is upper bounded.
\end{proof}
Theorem \ref{theo} demonstrates that the CCICP with early termination condition is  theoretically guaranteed to converge if the inexact parameter $\epsilon$ is upper bounded.
In our IKLR model, such convergence condition is easily satisfied and thus the inexact parameter can be set to a relatively large one in practice.
Note that the early termination condition in Algorithm \ref{alg:one1} is given by variations of the sub-problem function value $\mathcal{F}({\bm \alpha})$ between the two consecutive iterations instead of the gradient variations such as $\|\nabla \mathcal{F}_k({\bm \alpha}_k^{(t+1)}) - \nabla \mathcal{F}_k({\bm \alpha}_k^{(t)})\|<\epsilon$.
This is because it is relatively easier to compute the sub-problem function value than the gradient computation, especially when the stochastic gradient descent method is considered.
\subsection{Analysis with CCICP-SGD}
Apart from an early stop scheme in gradient descent to obtain an inexact solution of the sub-problem, effective stochastic gradient-based methods \cite{Lan2010Optimal,Nitanda2017AISTAS} can also be used as underlying solvers to accelerate the solving process, which achieves the similar approximation in expectation.
Specifically, since the sub-problem becomes strongly convex, the combination of CCCP and SGD in our model achieves fast convergence and theoretical guarantees when compared to directly using SGD to solve the initial non-convex problem.

Before we prove that a sequence $\{\bm{\alpha}_k \}_{k=0}^{\infty}$ generated by CCICP-SGD converges to a stationary point, we need a few additional results.
We denote the non-convex objective function by $\digamma(\bm \alpha) = g(\bm{\alpha})-h(\bm{\alpha})$.
The idea of the following lemma is to show that the objective function is monotonic decent in probability.
\begin{lemma}\label{lemma2}
  The sequence $\{\bm{\alpha}_k \}_{k=0}^{\infty}$ generated by CCICP-SGD satisfies the monotonic decent property in probability
  \begin{eqnarray}
    \mathbb{E}[\digamma(\bm{\alpha}_{k+1})] &\leq& \mathbb{E}[\digamma(\bm{\alpha}_k)].
  \end{eqnarray}
\end{lemma}
\begin{proof}
Due to the convexity of $g(\bm{\alpha})$ and $h(\bm{\alpha})$, we denote $\hbar(\bm{\alpha})= -h(\bm{\alpha})$ as a concave function, and thus $\digamma(\bm{\alpha})=g(\bm{\alpha})+\hbar(\bm{\alpha})$.
In the $k$th iteration, SGD is used to solve the sub-problem $\mathcal{F}_k(\bm{\alpha})$ in Eq.~\eqref{faak}, yielding $\bm{\alpha}_{k+1}$ to satisfy the KKT condition in expectation, namely $\mathbb{E}[\nabla \mathcal{F}_k(\bm{\alpha}_{k+1})]=0$.
Accordingly, we have the following equation
\begin{equation*}
  \mathbb{E}[\nabla g(\bm{\alpha}_{k+1})] = \mathbb{E}[\nabla h(\bm{\alpha}_k)] = -\mathbb{E}[\nabla \hbar(\bm{\alpha}_k)]\,.
\end{equation*}
For $\bm{\alpha}_k$ and $\bm{\alpha}_{k+1}$, it satisfies
\begin{equation}\label{asdd}
\left\{
\begin{array}{rcl}
\begin{split}
&  g(\bm{\alpha}_k) \geq g(\bm{\alpha}_{k+1}) + (\bm{\alpha}_k-\bm{\alpha}_{k+1})\nabla g(\bm{\alpha}_{k+1}) \\
&\hbar(\bm{\alpha}_k) \geq \hbar(\bm{\alpha}_{k+1}) + (\bm{\alpha}_k-\bm{\alpha}_{k+1})\nabla \hbar(\bm{\alpha}_k) \,.
\end{split}
\end{array} \right.
\end{equation}
Then we take the expectation of Eq.~\eqref{asdd} and use the expectation independence
\begin{equation*}
\left\{
\begin{array}{rcl}
\begin{split}
& \! \! \! \mathbb{E}[g(\bm{\alpha}_k)] \geq \mathbb{E}[g(\bm{\alpha}_{k+1})] + \mathbb{E}[(\bm{\alpha}_k-\bm{\alpha}_{k+1})]\mathbb{E}[\nabla g(\bm{\alpha}_{k+1})] \\
&\! \! \!   \mathbb{E}[\hbar(\bm{\alpha}_k)] \geq \mathbb{E}[\hbar(\bm{\alpha}_{k+1})] + \mathbb{E}[(\bm{\alpha}_k-\bm{\alpha}_{k+1})]\mathbb{E}[\nabla \hbar(\bm{\alpha}_k)]  \,.
\end{split}
\end{array} \right.
\end{equation*}
Combining the above two sub-equations, we have
\begin{eqnarray}
  \mathbb{E}[g(\bm{\alpha}_k)+\hbar(\bm{\alpha}_k)] &\geq& \mathbb{E}[g(\bm{\alpha}_{k+1})+\hbar(\bm{\alpha}_{k+1})]\,,
\end{eqnarray}
which completes the proof.
\end{proof}
Lemma \ref{lemma2} demonstrates the monotonic decent property in probability.
However, such analysis is not complete, as the monotone descent property by itself is not sufficient to claim the convergence of $\{\bm{\alpha}_k \}_{k=0}^{\infty}$.
The similar situation in the initial CCCP version has been discussed in \cite{Sriperumbudur2009On}.

In the following section, we firstly give the definition of Lipschitz smoothness required by the subsequent analyses, and then investigate the convergence of $\{\bm{\alpha}_k \}_{k=0}^{\infty}$.
\begin{definition}
  A function $F$ is gradient Lipschitz smooth if there exists $L_F>0$ such that
  \begin{equation*}\label{Lip}
   \| \nabla F(\bm{\alpha}) -  \nabla F(\bm{\alpha}') \|_2 \leq L_F\|\bm{\alpha} - \bm{\alpha}'   \|_2, \forall \bm{\alpha}, \bm{\alpha}' \in \text{dom}~ F.
  \end{equation*}
\end{definition}
Further, if $F$ is also a convex function over a (closed) convex domain, it satisfies
\begin{equation*}
  F(\bm{\alpha})-F(\bm{\alpha}') \leq \nabla F^{\top}(\bm{\alpha}')(\bm{\alpha} -\bm{\alpha}') + \frac{L_F}{2}\|  \bm{\alpha} -  \bm{\alpha}' \|_2^2\,.
\end{equation*}
Suppose that $\mathcal{F}_k$ is $L_{\mathcal{F}}$-smooth function, we have
\begin{equation*}
  \mathcal{F}_k(\bm{\alpha})-\mathcal{F}_k(\bm{\alpha}_k) \leq \nabla \mathcal{F}^{\top}_k(\bm{\alpha}_{k})(\bm{\alpha}-\bm{x}_k) +\frac{L_{\mathcal{F}}}{2}\| \bm{\alpha} - \bm{\alpha}_k \|_2^2\,.
\end{equation*}
To obtain $\bm{\alpha}_{k+1}$,  we use SGD to solve $\mathcal{F}_k(\bm{\alpha})$ with $T$ iterations, that is $ \bm{\alpha}_{k+1} \triangleq \bm{\alpha}_k^{(T)}$.
Specifically, the first iteration is defined as $\bm{\alpha}_k^{(1)}=\bm{\alpha}_k - \eta \widehat{\bm{g}}$ where the stepsize $\eta$ is set to $\frac{1}{L_{\mathcal{F}}}$ and the produced vector $\widehat{\bm{g}}$ satisfies $\mathbb{E}[\widehat{\bm{g}}]=\nabla \mathcal{F}_k(\bm{\alpha}_k)$.
Therefore, we have
\begin{equation}\label{as}
\begin{split}
&\mathcal{F}_k(\bm{\alpha}_k^{(1)})\!-\!\mathcal{F}_k(\bm{\alpha}_k)\! \leq \!\nabla \mathcal{F}^{\top}_k(\bm{\alpha}_{k})(\bm{\alpha}_k^{(1)}\!-\!\bm{\alpha}_k)\! +\!\frac{L_{\mathcal{F}}}{2}\|  \bm{\alpha}_k^{(1)}\!\! -\!\! \bm{\alpha}_k \|_2^2 \\
&~~~~~~~~~~~~~~~~~~~~~~= -\frac{1}{L_{\mathcal{F}}} \nabla \mathcal{F}^{\top}_k(\bm{\alpha}_{k}) \widehat{\bm{g}} + \frac{1}{2L_{\mathcal{F}}} \| \widehat{\bm{g}} \|_2^2\,.
  \end{split}
\end{equation}
We take the expectation
\begin{equation}\label{ad}
  \mathbb{E}[\mathcal{F}_k(\bm{\alpha}_k^{(1)})]\! \leq \! \mathbb{E}[\mathcal{F}_k(\bm{\alpha}_k)] - \frac{1}{L_{\mathcal{F}}} \mathbb{E}[\nabla \mathcal{F}^{\top}_k(\bm{\alpha}_{k}) \widehat{\bm{g}}] + \frac{1}{2L_{\mathcal{F}}} \mathbb{E}[\| \widehat{\bm{g}} \|_2^2],
\end{equation}
where the formula satisfies $\mathbb{E}[\nabla \mathcal{F}^{\top}_k(\bm{\alpha}_{k}) \widehat{\bm{g}}] \!= \! \mathbb{E}[\nabla \mathcal{F}^{\top}_k(\bm{\alpha}_{k})]\mathbb{E}[\widehat{\bm{g}}]$ because they are independent of each other, and $\mathbb{E}[\mathcal{F}_k(\bm{\alpha}_{k+1})] \triangleq \mathbb{E}[\mathcal{F}_k(\bm{\alpha}_k^{(T)})] \leq \mathbb{E}[\mathcal{F}_k(\bm{\alpha}_k^{(1)})]$ by Lemma \ref{lemma2}.
By combining Eq.~\eqref{as} and Eq.~\eqref{ad}, we obtain
\begin{equation}\label{expef}
  \mathbb{E}[\mathcal{F}_k(\bm{\alpha}_{k+1})] \leq  \mathbb{E}[\mathcal{F}_k(\bm{\alpha}_{k})] - \frac{1}{2L_{\mathcal{F}}} \mathbb{E}[\| \nabla \mathcal{F}^{\top}_k(\bm{\alpha}_{k}) \|_2^2]\,.
\end{equation}


Besides, $h(\bm{\alpha})$ is a convex function, and it satisfies $h(\bm{\alpha}_k)-h(\bm{\alpha}_{k+1}) \leq \nabla h^{\top}(\bm{\alpha}_k)(\bm{\alpha}_k-\bm{\alpha}_{k+1})$, and then
\begin{equation}\label{expeff}
\begin{split}
&~~~~\mathbb{E}[f(\bm{\alpha}_{k+1})] =  \mathbb{E}[g(\bm{\alpha}_{k+1}) -h(\bm{\alpha}_{k+1}) ]\\
&\leq \mathbb{E}\big[g(\bm{\alpha}_{k+1}) -  h(\bm{\alpha}_k) + \nabla h^{\top}(\bm{\alpha}_k)(\bm{\alpha}_k-\bm{\alpha}_{k+1})\big] \\
&= \mathbb{E}[\mathcal{F}_k(\bm{\alpha}_{k+1})]\,.
  \end{split}
\end{equation}
Note that $f(\bm{\alpha}_k)=\mathcal{F}_k(\bm{\alpha}_{k})$, from Eqs.~\eqref{expef} and \eqref{expeff}, we obtain
\begin{equation*}
  \mathbb{E}[f(\bm{\alpha}_{k+1})] \leq \mathbb{E}[f(\bm{\alpha}_{k})] - \frac{1}{2L_{\mathcal{F}}} \mathbb{E}[\| \nabla f(\bm{\alpha}_{k}) \|_2^2]\,.
\end{equation*}
Finally, we arrive at the following formula as we expect
\begin{equation}
  \mathbb{E}[f(\bm{\alpha}_{k}) - f(\bm{\alpha}_{k+1})] \geq \frac{1}{2L_{\mathcal{F}}} \mathbb{E}[\| \nabla f(\bm{\alpha}_{k}) \|_2^2]\,.
\end{equation}
By Lemma \ref{lemma2} and the above formula, we verify the convergence of $\{\bm{\alpha}_k \}_{k=0}^{\infty}$.
Such proof is definitely suitable for the proposed IKLR model.
To be specific, $L_{\mathcal{F}}$ in our IKLR model can be solved during the proof of Theorem \ref{theo}, and thus it is set to $L_{\mathcal{F}} = \lambda\| \bm{K}_+\|+\frac{\| \bm{K}\|}{4n}$.

{\bf Remark:} The convergence analyse of a stochastic proximal difference of convex algorithm has been discussed in \cite{Nitanda2017AISTAS}, which relies on
a known bounded residual error $\delta$, namely, $\mathcal{F}_k(\bm{\alpha}_{k+1}) \leq \mathcal{F}_k(\bm{\alpha}_{k}^*)+\delta$ as demonstrated.
However, the residual error $\delta$ is usually not known. In our analysis, the decrease is related to the gradient, which could be calculated or estimated in each iteration.

\section{Experiments}
\label{sec:experiment}
In this section, we carry out experiments to show the performance of the IKLR model with two indefinite kernels on a collection of multi-modal data sets from computer vision and machine learning fields.
The experiments implemented in MATLAB are repeated over 10 runs on a PC with Intel i5-6500 CPU (3.20 GHz) and 8 GB memory.
The source code of the proposed method can be found in \url{http://www.lfhsgre.org}.
\subsection{Experiment Setup}
Here we describe kernel settings, the compared algorithms, and other settings of the experiments.

\subsubsection{Kernel setting}
Three kernels including a positive definite one and two indefinite ones are chosen to fully evaluate the performance of our method.
As a representative PD kernel, the RBF (Radius Basis Function) kernel, \emph{i.e.}
$\mathcal{K}(\bm{x}_i,\bm{x}_j)=\exp(-\| \bm{x}_i-\bm{x}_j\|^2_2/\sigma^2)$ with the kernel width $\sigma$, is chosen for comparison.

For indefinite kernels, we first choose the truncated $\ell_1$ distance (TL1) indefinite kernel \cite{huang2017classification}, namely $\mathcal{K}(\bm{x}_i,\bm{x}_j)= \max\{\tau-\|\bm{x}_i-\bm{x}_j\|_1,0\}$, and then incorporate it into our model.
As discussed in \cite{huang2017classification}, the performance of the TL1 kernel is robust to $\tau$, and thus we set $\tau = 0.7m$ as suggested.

Apart from a delicately designed TL1 kernel, we extend the RBF kernel from Euclidean space to a Riemannian manifold with a geodesic metric \cite{Jayasumana2013Kernel}.
Here we use the covariance matrix descriptor \cite{Tuzel2006Region} on the space of $d \times d$ symmetric positive definite (SPD) matrices, namely $Sym_d^+$.
Let $\bm{S}_1$ and $\bm{S}_2$ be two descriptors (SPD matrices), if the Euclidean distance in Gaussian kernel between such two descriptors is used, the derived Gaussian kernel is positive definite.
Comparably, to define a kernel on a Riemannian manifold, we would like to replace the Euclidean distance
by a more accurate geodesic distance on the manifold.
However, not all geodesic metrics yield a PD kernel.
In \cite{Feragen2015Geodesic}, the authors point out that the geodesic
Gaussian kernels on Riemannian manifolds are PD only if the geodesic metric space is \emph{flat in the sense of Alexandrov}.
Here we summarize the definitions and properties of some representative metrics for $Sym_d^+$ in Table \ref{PSDkernel}.
It can be observed that the geodesic Gaussian kernel (``Log-Euclidean") is still PD since the geodesic metric space derived by Log-Euclidean is \emph{flat}, while the affine-invariant metric results in an indefinite one.
Based on this, in our experiment, we take the affine-invariant kernel as an example of indefinite kernels to test the proposed IKLR model.
\begin{table*}
\centering
\small
\caption{Properties of different metrics on $Sym_d^+$. We analyze positive definiteness of Gaussian kernels generated by different metrics.}
\begin{tabular}{cccccccc}
\toprule
    & \multirow{2}{*}{Metric}  &\multirow{2}{*}{Formulation ($d$)} &\multirow{2}{*}{Geodesic distance?} &Does $k=\exp(-d^2/\sigma^2)$ & \multirow{2}{*}{Time complexity}\\
    & & & & define a positive definite kernel? &\\
  \midrule
  & Euclidean & $\| \bm{S}_1 - \bm{S}_2 \|_\text{F}$ & No  & Yes & $\mathcal{O}(d^2)$\\
  \hline
  &Log-Euclidean (Log-E) &$\|\log(\bm{S}_1)-\log(\bm{S}_2)\|_\text{F}$ & Yes  &Yes & $\mathcal{O}(d^3)$\\
  \hline
  &Affine-Invariant (Aff-I) &$\| \log(\bm{S}_1^{-\frac{1}{2}}\bm{S}_2\bm{S}_1^{-\frac{1}{2}}) \|_\text{F} $	&Yes	& No  & $\mathcal{O}(d^3)$ \\
\bottomrule
\end{tabular}
\label{PSDkernel}
\end{table*}

\subsubsection{Compared methods}
\label{sec:compare}
We conduct the proposed IKLR model with two versions, including CCICP-GD and CCICP-SGD, in which the inexact parameter $\epsilon$ in CCICP-GD and CCICP-SGD is set to 1 and 0.0001, respectively\footnote{Since SGD achieves the similar purpose with the inexact scheme to solve the sub-problem, the inexact parameter $\epsilon$ in CCICP-SGD is fixed with an ``exact" one.}.
The proposed algorithms are compared with other representative indefinite kernel methods:
``Flip", ``Clip", and ``Shift" \cite{wu2005analysis}: these methods use the spectrum transformation to directly transform the non-PSD kernel matrix into a PSD one;
``TDCASVM" \cite{Akoa2008Combining}: an approach incorporates CCCP into the SMO-type algorithm for SVM with an indefinite kernel;
``KSVM" \cite{Ga2016Learning}: this method formulates the indefinite SVM as a min-max problem, and then solves this optimization problem in Kre\u{\i}n space.
In essence, our method and ``TDCASVM" directly solve a non-convex optimization problem, while the objective function in other algorithms has been transformed into a convex form.
Specifically, two PD kernels, \emph{i.e.} Gaussian kernel and Log-Euclidean kernel are incorporated into SVM and KLR as baseline methods for comparisons.

\subsubsection{Parameter setting}
In our experiment, we choose $\lambda$, the kernel width $\sigma$ in Gaussian kernel, and $C$ in SVM by five-fold cross validation over $\{ 0.0001, 0.001, 0.01, 0.1, 1, 5,10\}$ on the training set.
For each dataset, half of the data are randomly picked up for training and the rest for the test process.

\subsection{Results on UCI Database}
\label{sec:uci}
\subsubsection{Description of data sets}
Table \ref{tab:freq} lists a brief description of 20 data sets from UCI Machine Learning Repository \cite{Blake1998UCI} including the feature dimension $m$, the number of data points $n$ (the training and test data have been divided in some datasets such as \emph{monks1}, \emph{monks2}, and \emph{monks3}), the minimum and maximum eigenvalues $\mu_{\min}$ and $\mu_{\max}$ of the TL1 kernel.
\begin{table}
\centering
  \caption{
  Statistics for various data sets. Specifically, the larger than small scale data sets are highlighted by \textbf{bold}.
  }
  \label{tab:freq}
  \begin{tabular}{ccccccc}
    \toprule
  Dataset  &$m$(feature) 	&$n$(\#num) & $\mu_{\min}$ &$\mu_{\max}$\\
  \midrule
  australian
  &14
  &690
  &-0.006
 &2212.5\\
 \hline
  breast-cancer
  &10
  &699
  &-4.408
 &1593.3\\
 \hline
 parkinsons
  &23
  &195
  &0.127
 &1200.4\\
   \hline
   climate
  &20
  &540
  &0.174
 &1944.7\\
 \hline
  diabetic
  &19
  &1151
  &-0.003
 &6133.1\\
 \hline
   fertility
  &9
  &100
  &-0.042
 &164.98\\
 \hline
 sonar
     &60
  &208
  &1.452
 &3024.6\\
   \hline
SPECT
  &21
  &80
  &-1.145
 &353.11\\
   \hline
  haberman
  &3
  &306
  &-0.204
 &215.23\\
 \hline
 heart
  &13
  &270
  &-0.084
 &695.16\\
 \hline
 ionosphere
   &33
  &351
  &0.085
 &2489.7\\
 \hline
monks1
  &6
  &124
  &-2.094
 &94.077\\
  \hline
  monks2
  &6
  &169
  &-2.535
 &131.14\\
   \hline
  monks3
  &6
  &122
  &-1.764
 &95.376\\
 \hline
   splice
   &60
  &1000
  &-1.325
 &2885.3\\
 \hline
transfusion
  &4
  &748
  &-0.336
 &818.74\\
 \hline
 {\bf EEG}
  &14
  &14980
  &-0.444
 &7312.0\\
  \hline
   {\bf ijcnn1-tr}
  &26
  &35000
  &-0.018
 &28945\\
    \hline
 {\bf guide1-t}
  &4
  &4000
  &-0.805
 &4116.7\\
   \hline
 {\bf madelon}
  &500
  &2000
  &14.825
 &27015\\
  \bottomrule
\end{tabular}
\end{table}

\subsubsection{Results on small-scale data sets}
Table~\ref{UCIres} reports the average classification accuracy on the test data and its standard deviation.
\begin{table*}[!htb]
\centering
\small
\caption{Classification accuracy (mean$\pm$std. deviation) of each compared method. The best performance is highlighted in \textbf{bold}. }
\begin{tabular}{|p{1.2cm}|p{1.5cm}|p{1.5cm}|p{1.5cm}|p{1.5cm}|p{1.5cm}|p{1.5cm}|p{1.5cm}|p{1.7cm}|}
\hline
    &KLR(RBF) & Flip  &Clip &Shift &TDCASVM & KSVM &CCICP-GD &CCICP-SGD \\
  \hline
    australian
  &0.790$\pm$0.021 &0.795$\pm$0.059	&0.790$\pm$0.078	&0.672$\pm$0.085 &0.857$\pm$0.013 &0.835$\pm$0.018	&0.846$\pm$0.027	&{\bf 0.865}$\pm$0.007\\
  \hline
  breast
    &0.968$\pm$0.004 &0.964$\pm$0.012	&0.965$\pm$0.007	&0.921$\pm$0.039		&0.946$\pm$0.008 &{\bf 0.971}$\pm$0.008	&0.959$\pm$0.014	&0.967$\pm$0.008 \\
    \hline
    climate
&	{\bf 0.939}$\pm$0.012 &	0.909$\pm$0.047	&	0.838$\pm$0.036	&	0.839$\pm$0.061	&0.904$\pm$0.031 &	0.918$\pm$0.011	&	0.912$\pm$0.013&	0.923$\pm$0.018	\\
\hline
diabetic	
&	{\bf 0.625}$\pm$0.007 &	0.532$\pm$0.016&	0.529$\pm$0.012	&	0.534$\pm$0.016	&0.545$\pm$0.024 &	0.570$\pm$0.020	&	0.552$\pm$0.034	&	0.516$\pm$0.065	\\	\hline
fertility	&	0.852$\pm$0.023 &	{\bf 0.896}$\pm$0.025	&	0.888$\pm$0.039&	0.884$\pm$0.040	&0.880$\pm$0.025 &	0.873$\pm$0.034	&	0.780$\pm$0.040 &	0.860$\pm$0.024	\\	\hline
haberman	&	0.742$\pm$0.040 &	0.678$\pm$0.039&	0.725$\pm$0.037&	0.733$\pm$0.023	&0.722$\pm$0.031 &	0.730$\pm$0.034	&	0.727$\pm$0.035	&	\textbf{0.766}$\pm$0.020\\	\hline
heart	&	0.816$\pm$0.044 &	0.758$\pm$0.076&	0.760$\pm$0.035	&	0.747$\pm$0.059	&\textbf{0.823}$\pm$0.031 &	0.757$\pm$0.043	&	0.803$\pm$0.046	&	0.809$\pm$0.023	\\	\hline
ionosphere	&	0.907$\pm$0.029 &	0.891$\pm$0.032&	0.900$\pm$0.032	&	0.841$\pm$0.022	&0.903$\pm$0.022 &	0.899$\pm$0.014	&	0.901$\pm$0.015	&	\textbf{0.915}$\pm$0.028	\\	\hline
monks1	&	0.668$\pm$0.052 &	0.695$\pm$0.075	&	0.648$\pm$0.070	&	0.685$\pm$0.063	&0.678$\pm$0.021 &	0.671$\pm$0.035	&	\textbf{0.765}$\pm$0.065	&	0.752$\pm$0.041	\\	\hline
monks2	&	0.662$\pm$0.071 &	0.611$\pm$0.047	&	0.593$\pm$0.025	&	0.489$\pm$0.092	&\textbf{0.743}$\pm$0.018 &	0.626$\pm$0.037	&	0.669$\pm$0.093	&	0.617$\pm$0.046	\\	\hline
monks3	&	0.779$\pm$0.073 &	0.723$\pm$0.090	&	0.805$\pm$0.021	&	0.870$\pm$0.036	&0.734$\pm$0.045 &	0.640$\pm$0.083	&	0.830$\pm$0.072	&	\textbf{0.893}$\pm$0.031	\\	\hline
parkinsons	&	\textbf{1.000}$\pm$0.000 &	0.990$\pm$0.010	&	0.999$\pm$0.003	&	0.998$\pm$0.007	&	\textbf{1.000}$\pm$0.000 &	0.945$\pm$0.039	&	\textbf{1.000}$\pm$0.000	&	\textbf{1.000}$\pm$0.000	\\	\hline
sonar	&	0.789$\pm$0.022 &	0.546$\pm$0.045	&	0.539$\pm$0.042	&	0.504$\pm$0.054	&0.704$\pm$0.024 &	0.608$\pm$0.072	&	\textbf{0.794}$\pm$0.060	&	0.690$\pm$0.121	\\	\hline
SPECT	&	0.737$\pm$0.092 &	0.652$\pm$0.026	&	0.706$\pm$0.022	&	0.667$\pm$0.034	&0.711$\pm$0.024 &	\textbf{0.893}$\pm$0.024	&	0.764$\pm$0.059	&	0.738$\pm$0.076	\\	\hline
splice	&	0.642$\pm$0.093 &	0.513$\pm$0.017	&	0.619$\pm$0.057	&	0.604$\pm$0.033	&0.751$\pm$0.075 &	0.515$\pm$0.029	&	\textbf{0.785}$\pm$0.050	&	0.588$\pm$0.047	\\	\hline
transfusion	&	0.741$\pm$0.048 &	0.734$\pm$0.095	&	0.717$\pm$0.020	&	0.736$\pm$0.038	&0.762$\pm$0.015 &	0.762$\pm$0.006	&	0.769$\pm$0.023	&	\textbf{0.774}$\pm$0.009	\\	
 \hline
\end{tabular}
\label{UCIres}
\end{table*}
One can see that the proposed IKLR model with CCICP-GD and CCICP-SGD algorithms achieves a promising performance in most data sets such as \emph{australian}, \emph{monks1}, and \emph{splice}.
TDCASVM, KSVM and the baseline (KLR) also provide a comparable performance on some data sets including \emph{monks2}, \emph{SPECT}, and \emph{diabetic}.
Meanwhile, the performance of kernel approximation methods is often inferior to other indefinite learning based algorithms.
Besides, we observe that the training TL1 kernel in several data sets such as \emph{climate} and \emph{parkinsons} is still positive definite.
In these data sets, there is no distinct difference on the classification accuracy for most compared algorithms.

In terms of these algorithms, comparing the baseline (KLR), we find that the used TL1 kernel is able to adaptively find the partition and locally fit nonlinearity.
However, in this paper we do not want to claim that the TL1 kernel is better than the RBF kernel, as their performance is actually dependent on the specific task.
Instead, our aim is to show the performance of the indefinite kernels in KLR.
Since the indefinite kernels contain definite ones, it can be expected that a suitable indefinite kernel can outperform a positive definite kernel.

Besides, it can be noted that KSVM investigates its dual form in RKKS, while we directly focus on the non-convex primal form of the IKLR model using the representer theorem.
The coefficient vector ${\bm \alpha}$ in IKLR should not be interpreted as a Lagrange multiplier, which is different from the dual variable in SVM.
Therefore, our IKLR model is more flexible to learn the data distribution than KSVM.
Admittedly, KLR and SVM based algorithms including TDCASVM and KSVM have their respective pros and cons, and each solution might depend on a case by case basis.

\subsubsection{Results on larger than small scale data sets}
Apart from experiments on several small scale data sets, we also conduct the proposed CCICP algorithm on four larger than small scale data sets to further validate its effectiveness.
 Table~\ref{tabptb} reports the test accuracy, training time and test time of the original CCCP, and the proposed CCICP-GD and CCICP-SGD.
Specifically, CCCP-GD is taken as a baseline to evaluate the performance of the proposed two algorithms.
In CCCP-GD, the inexact parameter $\epsilon$ is fixed with 0 in theoretical aspect while we set it to 0.0001 in practice.
This small value in our experiments means that CCCP-GD is not equipped with any inexact scheme, which further guarantees that CCCP-GD is able to yield an accurate solution during each iteration.
\begin{table}[t]
\centering
  \begin{threeparttable}
\scriptsize
\caption{\footnotesize Results of CCCP-GD, CCICP-GD and CCICP-SGD on four larger than small scale data sets.}\label{tabptb}
\begin{tabular}{ccccccccccc}
  \toprule
  Data sets & Methods & Accuracy & Training time(s) & Test time(s)\\
  \midrule
  \multirow{3}{*}{EEG} &{CCCP-GD} &0.769$\pm$0.042 &17171.0 &0.1237 \\
  &CCICP-GD &0.725$\pm$0.042 &848.89 &0.1304  \\
  &CCICP-SGD &0.730$\pm$0.080 &8776.4  &0.1188  \\
  \hline
  \multirow{3}{*}{guide1-t} &{CCCP-GD} &0.962$\pm$0.003 &1314.3 &0.0020 \\
  &CCICP-GD &0.955$\pm$0.003 &47.23  &0.0028  \\
  &CCICP-SGD &0.947$\pm$0.022 &714.72   &0.0021  \\
  \hline
    \multirow{3}{*}{ijcnn1-tr} &{CCCP-GD} &0.912$\pm$0.002 &51.22  &0.2215 \\
  &CCICP-GD &0.914$\pm$0.003 &14.65   &0.2273   \\
  &CCICP-SGD &0.914$\pm$0.006 &28.23    &0.2258  \\
  \hline
  \multirow{3}{*}{madelon} &{CCCP-GD} &0.624$\pm$0.080 &305.29  &0.0008 \\
  &CCICP-GD &0.609$\pm$0.051 &8.129   &0.0064   \\
  &CCICP-SGD &0.601$\pm$0.028 &160.71    &0.0011  \\
  \bottomrule
\end{tabular}
 \end{threeparttable}
\end{table}

On \emph{EEG}, \emph{guide1-t}, and \emph{madelon} data sets, CCCP-GD achieves the best performance on classification accuracy, which is narrowly followed by CCICP-GD and CCICP-SGD.
On the \emph{ijcnn1-tr} data set, the above three algorithms achieve the similar classification performance without distinct difference.
In terms of the computational cost during training, CCICP-GD is the most efficient, while CCCP-GD is much time-consuming.
Note that $\epsilon$ in the proposed CCICP-SGD algorithm is set to 0.0001, which makes the training process relatively inefficient.
Specifically, we also discuss CCICP-SGD with $\epsilon=1$ in Section \ref{sec:diss} to see how fast it is.
From above analyses, we can conclude that the proposed IKLR model with CCICP is often slightly inferior to the CCCP setting in the terms of classification accuracy, but the inexact scheme makes our method much efficient during the training process.

The experimental results on UCI database demonstrate the superiority of our IKLR model with positive definite or indefinite kernels.
Besides, the inexact scheme including the early termination condition and a stochastic version is able to accelerate the training process.
\subsection{Results on Yale Face Database}
Apart from using the designed TL1 kernel on UCI database, we also illustrate the use of the geodesic Gaussian kernel on the Riemannian manifold in the IKLR model for face recognition on $Sym_d^+$.
In the experiment, we choose the Yale face database B\footnote{\url{http://vision.ucsd.edu/content/yale-face-database}} to evaluate the performance of the proposed IKLR model with the Affine-Invariant kernel.
This database contains 5760 single light source images of 10 subjects with each shot under 576 viewing conditions (9 poses $\times$ 64 illumination conditions).
For every subject in a particular pose, an image with ambient (background) illumination is also captured. Hence, the total number of images is in fact 5760+90=5850.
All images have been cropped based on the location of eyes. The size of each image is 192$\times$168.
Fig. \ref{yaleB} shows some image examples of this database.
\begin{figure}
\begin{center}
\includegraphics[width=0.36\textwidth]{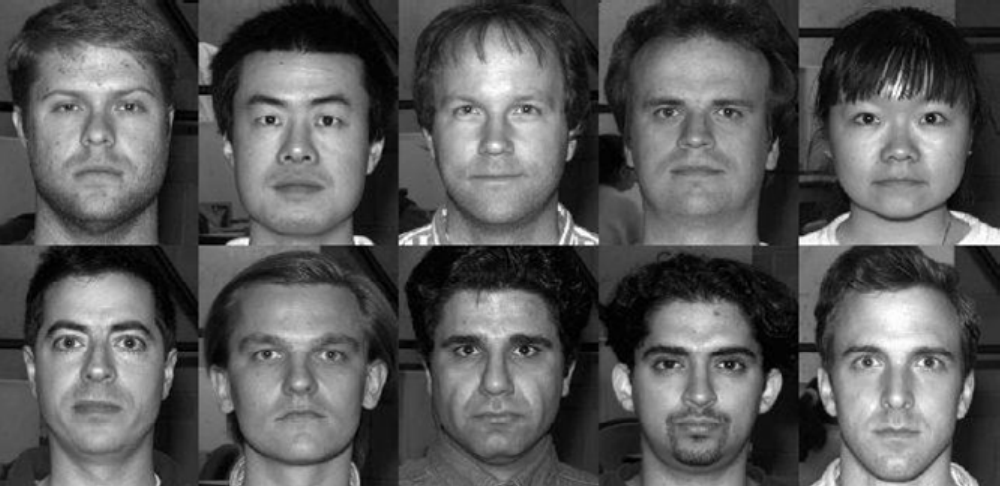}
\caption{Some examples from the Yale Face Database B.}
\label{yaleB}
\end{center}
\end{figure}

To compute the Affine-Invariant kernel (shown in Table~\ref{PSDkernel}) on the Riemannian manifold, we use covariance descriptors \cite{Tuzel2006Region} computed from the feature vector $\Big[ x,y,I_{xy},|G_x|,|G_y|, \sqrt{G_x^2+G_y^2}, |G_{xx}|,|G_{yy}|, \arctan\big( \frac{|G_x|}{|G_y|}\big) \Big]$, where $x$, $y$ are pixel locations, $I_{xy}$ is the grayscale value at $xy$-coordinate location,  and $G_x$, $G_y$ are the first order intensity derivatives.
Likewise, $G_{xx}$, $G_{yy}$ are the second order intensity derivatives.
Accordingly, each face image in this database can be represented by the covariance matrix, \emph{i.e.} a 9 $\times$ 9 SPD matrix.
And then, the similarity between two images can be evaluated by the Affine-Invariant kernel.

\begin{table*}
\centering
  \caption{Statistics of the Affine-Invariant(Aff-I) kernel and the test classification accuracy(\%) on Yale Face Database B.}
  \label{tab:ESC}
  \begin{tabular}{ccc|cc|ccccccccccc}
    \toprule
   &$\mu_{\min}$  &$\mu_{\max}$	&SVM(Log-E) & KLR(Log-E) &Flip &Clip &Shift &TDCASVM &KSVM &CCICP-GD &CCICP-SGD\\
  \midrule
  &-160.28
  &670.05
  &97.6$\pm$0.5
  &97.3$\pm$0.3
  &97.4$\pm$0.2 &95.6$\pm$2.9 &96.0$\pm$1.6 &97.7$\pm$0.5 &97.8$\pm$0.3
  &98.3$\pm$0.8 &97.8$\pm$1.7\\
  \bottomrule
\end{tabular}
\end{table*}

Table~\ref{tab:ESC} presents statistics including the minimum and maximum eigenvalues of the Affine-Invariant kernel, i.e. $\mu_{\min}$ and $\mu_{\max}$.
Note that such kernel on this data shows highly indefinite.
We compare the proposed CCICP-GD and CCICP-SGD algorithms with five indefinite learning based methods with the Affine-Invariant kernel including ``Flip", ``Clip", ``Shift", ``TDCASVM" and ``KSVM".
And also, the Log-Euclidean kernel \cite{Jayasumana2013Kernel}, a PD kernel, is incorporated into two representative classifiers SVM and KLR for comparisons.
Table~\ref{tab:ESC} reports the average test accuracy(\%) across abovementioned algorithms.
One can see that these kernel approximation based methods ``Flip", ``Clip" and ``Shift" do not achieve satisfactory performance when compared with other positive definite/indefinite kernel learning based algorithms. This is because the above methods actually change the indefinite matrix itself.
Among these indefinite learning based algorithms, the proposed CCICP-GD algorithm performs better than ``TDCASVM" and ``KSVM" with a margin of 0.6\% and 0.5\% on the test classification accuracy.
Besides, the proposed CCICP-SGD algorithm achieves a comparable performance among these methods.
The experimental results on this data set reinforce to demonstrate the effectiveness of the proposed algorithms with various indefinite kernels.

\subsection{Effect of the Inexact Parameter $\epsilon$}
As aforementioned, the only difference between CCCP-GD and CCICP-GD is the selection of the inexact parameter $\epsilon$.
When $\epsilon$ approaches to zero, the CCICP-GD algorithm degenerates to a standard CCCP-GD algorithm.
In our experiment, $\epsilon$ is set to 0.0001 in the CCCP-GD algorithm while we choose $\epsilon=1$ in CCICP-GD.
Based on this, this section investigates how its variation (\emph{i.e.} 0.0001, 0.001, 0.01, 0.1, 0.5, 1, 5) in the inexact solving scheme influences the test accuracy and the computational cost during training.
Two data sets appeared in Section \ref{sec:uci} are used here for our experiments, namely a small-scale data set  \emph{monks1} and a larger one \emph{guide1-t}.

\begin{figure}
\begin{center}
\subfigure[]{\label{ga1flw}
\includegraphics[width=0.23\textwidth]{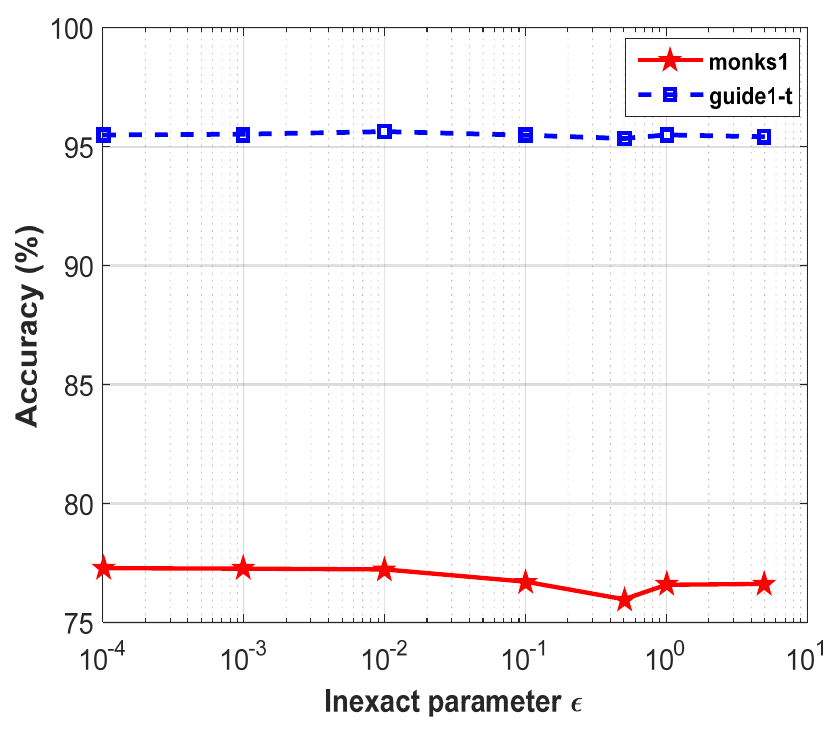}}
\subfigure[]{\label{gacvflw}
\includegraphics[width=0.23\textwidth]{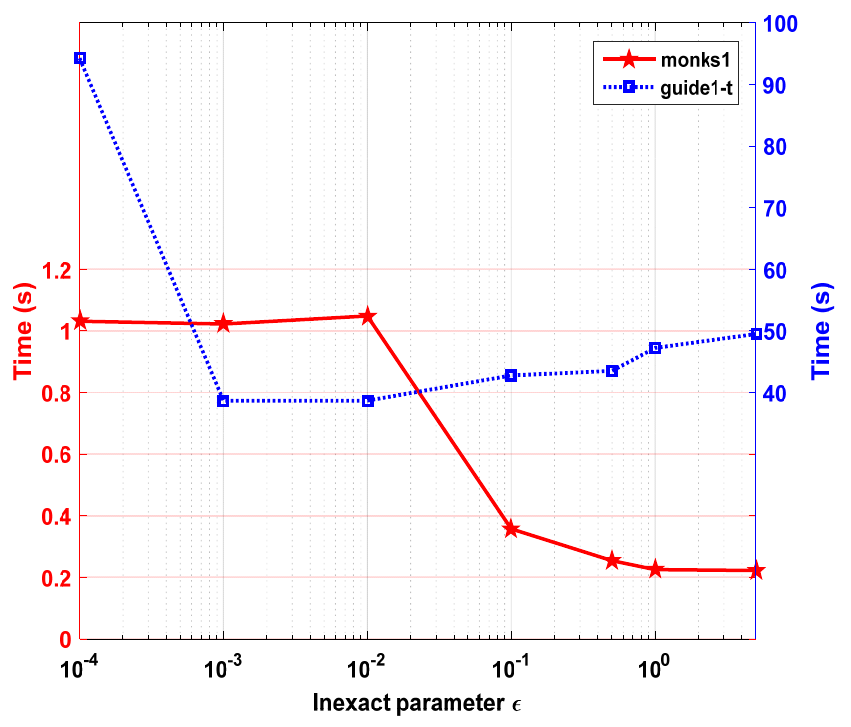}}
\caption{Influence of tuning $\epsilon$ on \emph{monks1} (red) and \emph{guide1-t} (blue), with accuracy (a) and time (b) versus iteration.}
\label{mu}
\end{center}\vspace{-0.cm}
\end{figure}

Fig.~\ref{ga1flw} illustrates that the performance of CCICP-GD is generally not sensitive to $\epsilon$ on such two different types of data sets.
In Fig.~\ref{gacvflw}, on \emph{monks1}, the training time cost does not dramatically decrease when $\epsilon$ ranges from 0.0001 to 0.01, and then it rapidly falls down.
We can conclude that CCICP-GD ($\epsilon=\text{1}$) is much efficient than CCCP-GD ($\epsilon=\text{0.0001}$), and thus such tendency demonstrates the effectiveness of the proposed inexact scheme.
Meanwhile, on \emph{guide1-t}, the setting with $\epsilon=\text{0.001}$ spends the minimum time during training, which almost cuts by half when compared with the situation of initial value $\epsilon=\text{0.0001}$.
After that, the time cost steadily increases, which shows an ``abnormal" tendency on $\epsilon$.
This is because the algorithm with an inexact solution sometimes requires more iterations to converge to a stationary point.
However, CCICP-GD ($\epsilon=\text{1}$) is still efficient than CCCP-GD ($\epsilon=\text{0.0001}$) in this data set.
Generally, CCICP-GD with larger $\epsilon$ often spends less training time than the setting with smaller one to converge.
This is because the termination condition can be significantly relaxed, which has been well demonstrated on these two data sets.

\subsection{Algorithm Convergence}
\label{sec:conv}
Fig.~\ref{conv} shows the convergence of IKLR with three optimization algorithms on \emph{monks1} and \emph{ijcnn1-tr}.
\begin{figure}
\begin{center}
\subfigure[]{\label{conv1}
\includegraphics[width=0.23\textwidth]{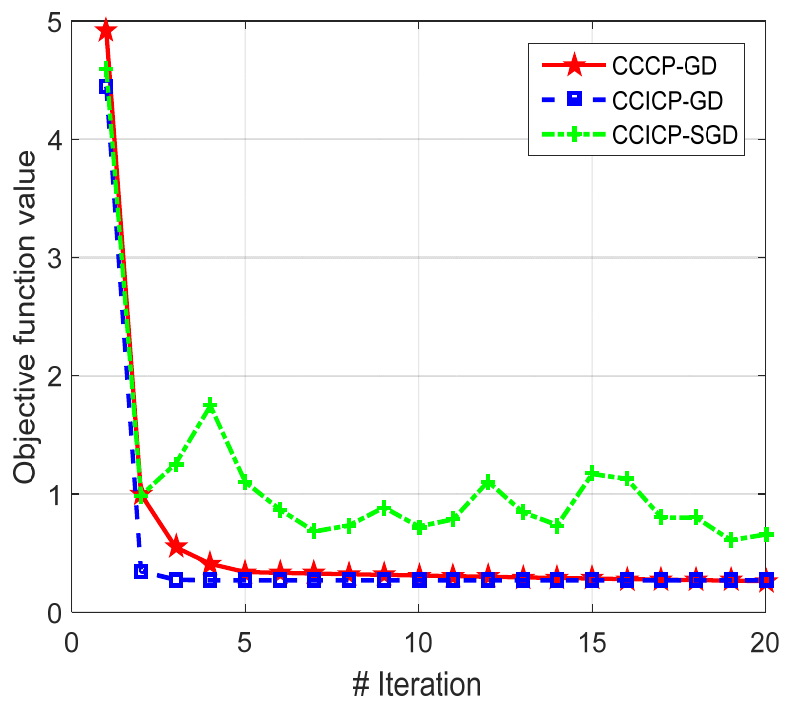}}
\subfigure[]{\label{conv2}
\includegraphics[width=0.23\textwidth]{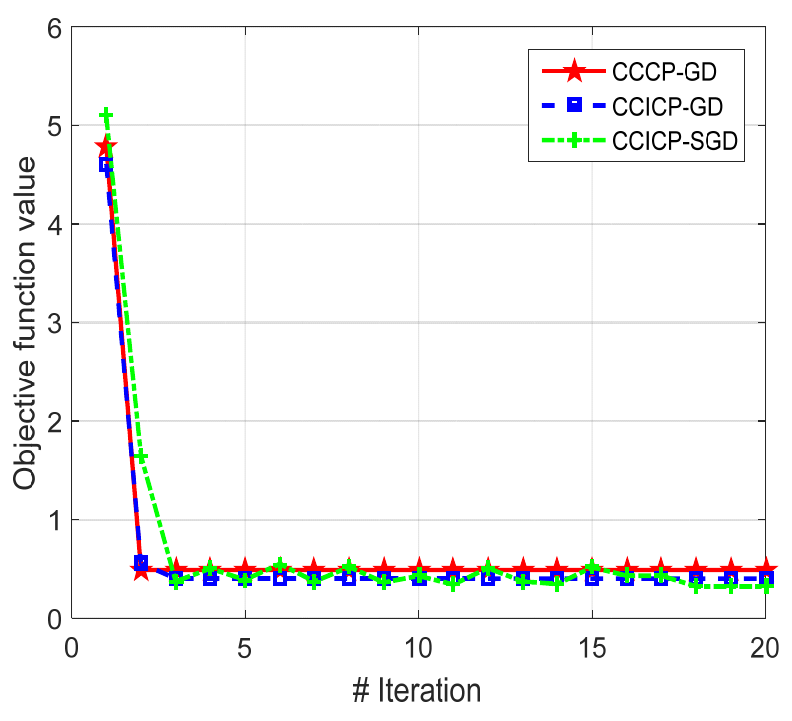}}
\caption{Convergence plots for CCCP-GD (red), CCICP-GD (blue) and CCICP-SGD (green) on \emph{monks1} (a) and \emph{ijcnn1-tr} (b), respectively.}
\label{conv}
\end{center}\vspace{-0.cm}
\end{figure}
It can be observed that, in Fig.~\ref{conv1}, on the \emph{monks1} data set, CCICP-GD converges within 5 iterations, but CCCP-GD takes 16 iterations to converge.
In Fig.~\ref{conv2}, both CCCP-GD and CCICP-GD converge fast on the \emph{ijcnn1-tr} data set.
The above two gradient-based algorithms (CCCP-GD and CCICP-GD) monotonically decrease in each iteration.
However, in our CCICP-SGD version, it cannot be guaranteed to monotonically decrease due to its random scheme.
Instead, it just converges to a stationary point in expectation as shown in Fig.~\ref{conv}.

\subsection{Different Random Initializations}
\begin{table}[tp]
  \centering
  \fontsize{6.5}{8}\selectfont
  \begin{threeparttable}
  \caption{CCICP-GD and CCICP-SGD with different random initializations on the \emph{monks1} and \emph{guide1-t} dataset.}
  \label{tab:init}
    \begin{tabular}{ccccccccccc}
    \toprule
    \multirow{1}{*}{Method}&
    \multicolumn{2}{c}{CCICP-GD} &\multicolumn{2}{c}{CCICP-SGD} \cr
     \cmidrule(lr){2-3} \cmidrule(lr){4-5}
    {Initialization}&\emph{monks1}&\emph{guide1-t}&\emph{monks1} &\emph{guide1-t}\cr
    \midrule
        ${\bm \alpha}^{(0)}=\bm{0}$&0.694 &0.925 &0.662$\pm$0.050 &0.926$\pm$0.003 \cr
    ${\bm \alpha}^{(0)}=\bm{1}$ &0.697 &0.925 &0.675$\pm$0.037 & 0.919$\pm$0.016\cr
    ${\bm \alpha}^{(0)}=-\bm{1}$ &0.694 &0.925 &0.661$\pm$0.043 &0.923 $\pm$0.008 \cr
    $\alpha^{(0)}_i \in (0,1)$ &0.706 &0.927 &0.663$\pm$0.054 &0.926$\pm$0.049 \cr
    \bottomrule
    \end{tabular}
    \end{threeparttable}
\end{table}

The proposed algorithms, CCICP-GD and CCICP-SGD, have been experimentally demonstrated to be converge as illustrated in Section~\ref{sec:conv}.
Since the IKLR model is non-convex, different initializations might lead to different stationary points.
Here we choose two data sets, \emph{monks1} and \emph{guide1-t}, to investigate the influence of our algorithms with different initializations on the final classification accuracy.
Such two data sets are conducted with 10 runs on a fixed (or pre-defined) training and test data for fair comparisons.
As suggested in \cite{Dauphin2014Identifying}, in our experiment, we choose four different initializations with small values, \emph{i.e.} ${\bm \alpha}^{(0)}=\bm{0}$, ${\bm \alpha}^{(0)}=\bm{1}$, ${\bm \alpha}^{(0)}=-\bm{1}$, and the randomly initialization $\alpha^{(0)}_i \in (0,1)$.
By doing so, such small initialization values can guarantee that the objective function value in Eq.~\eqref{iklrmain} is always positive during the optimization process.
Table~\ref{tab:init} demonstrates that different initializations near zero often lead to slight fluctuation on the final classification accuracy.

\subsection{Discussion on CCICP-SGD}
\label{sec:diss}
As aforementioned in Section \ref{sec:compare}, the inexact parameter $\epsilon$ in CCICP-SGD is fixed to 0.0001 because SGD achieves the similar purpose with the inexact scheme \emph{i.e.}, $\epsilon=\text{1}$.
However, in Table~\ref{tabptb}, it can be observed that CCICP-GD is much efficient than CCICP-SGD.
Such time cost reduction motivates us to see how fast our algorithm can be when SGD comes to the inexact scheme.
Accordingly, we investigate the performance of CCICP-SGD with the early termination condition (\emph{i.e.} $\epsilon=1$), termed as ``CCICP-SGD-I".

\begin{table*}[tp]
  \centering
  \fontsize{6.5}{8}\selectfont
  \begin{threeparttable}
  \caption{Results of CCICP-SGD and CCICP-SGD-I on four larger than small scale data sets.}
  \label{sgdi}
    \begin{tabular}{ccccccccccc}
    \toprule
    \multirow{1}{*}{Dataset}&
    \multicolumn{2}{c}{EEG} &\multicolumn{2}{c}{guide1-t} &\multicolumn{2}{c}{ijcnn1-tr} &\multicolumn{2}{c}{madelon} \cr
    \cmidrule(lr){2-3} \cmidrule(lr){4-5} \cmidrule(lr){6-7}  \cmidrule(lr){8-9}
    {Method}&CCICP-SGD&CCICP-SGD-I&CCICP-SGD&CCICP-SGD-I &CCICP-SGD&CCICP-SGD-I
    &CCICP-SGD&CCICP-SGD-I\cr
    \midrule
        Accuracy &0.630$\pm$0.042 &0.590$\pm$0.052
&0.947$\pm$0.022  &0.876$\pm$0.037
&0.914$\pm$0.006 &0.912$\pm$0.003
&0.601$\pm$0.028 &0.550$\pm$0.065 \cr
    Training time &8776.4 &71.2754 &714.72 &1.4146 &28.23 &2.445 &160.71 &2.543\cr
    Test time &0.1188 &0.0235 &0.0021 & 0.0023 &0.2258 &0.2378 &0.0011 &0.0025 \cr
    \bottomrule
    \end{tabular}
    \end{threeparttable}
\end{table*}

Table~\ref{sgdi} reports the classification accuracy and the computation cost of CCICP-SGD and CCICP-SGD-I on four larger than small scale data sets.
One can see that CCICP-SGD-I degrades the test accuracy to some extent when compared with CCICP-SGD on \emph{EEG}, \emph{guide1-t}, and \emph{madelon}.
However, CCICP-SGD-I equipped with the inexact scheme extremely accelerates the training process, of which the training time is about one-hundreds or less than that of CCICP-SGD.
On \emph{ijcnn1-tr}, CCICP-SGD-I is more efficient than CCICP-SGD without too much degeneracy on the classification accuracy.
\section{Conclusion}
\label{sec:conclusion}
In this paper, we investigate kernel logistic regression with indefinite kernels in theoretical and algorithmic aspects.
The derived IKLR model is non-convex and further analysed in RKKS with explicit demonstration due to the non-positive definite kernels.
Such non-convex problem can be effectively and efficiently solved by the proposed CCICP equipped with two approximation schemes.
Its GD version using an early stop scheme is able to make the training process efficient;
the stochastic variant of CCICP also has the capability of accelerating the solving process.
The convergence analyses of CCICP-GD and CCICP-SGD are conducted with theoretical guarantees and experimental validation.
The classification accuracy of the proposed IKLR model on several benchmarks demonstrates its effectiveness when compared to other positive definite/indefinite kernel learning methods.

\section*{Acknowledgements}
The authors would like to thank Jiaxuan Xie from Shanghai Jiao Tong University for his work on the implementation of TDCASVM, and also sincerely appreciate the anonymous reviewers for their insightful comments.
\bibliographystyle{IEEEbib}
\bibliography{refs}

\end{document}